\begin{document}

\lhead{Emilie Morvant}
\rhead{Domain adaptation of weighted majority votes via PV-based self-labeling} 
\rfoot[Author's draft]{\thepage} 
\cfoot{} 
\lfoot[\thepage]{Author's draft}

\renewcommand{\headrulewidth}{0.4pt}  
\renewcommand{\footrulewidth}{0.4pt}

\title{Domain adaptation of weighted majority votes via perturbed variation-based self-labeling} 

 \author{
Emilie Morvant\footnote{Most of the work in this paper was carried out while Emilie Morvant was affiliated with Institute of Science and Technology (IST) Austria, 3400 Klosterneuburg.}
\and University Jean Monnet of Saint-Etienne, Laboratoire Hubert Curien, UMR CNRS 5516 
\and 18 rue du Professeur Beno\^it Lauras, 42000 Saint-Etienne, France 
\and \texttt{emilie.morvant@univ-st-etienne.fr}}

\maketitle

\begin{abstract}
In machine learning, the domain adaptation problem arrives when the test (target) and the train (source) data are generated from different distributions.
A key applied issue is thus the design of algorithms able to generalize on a new distribution, for which we have no label information.
We consider the specific PAC-Bayesian situation focused on learning classification models defined as a weighted majority vote over a set of real-valued functions. 
In this context, we present \mbox{PV-MinCq} a new framework that generalizes the non-adaptive algorithm MinCq.
\mbox{PV-MinCq} follows the next principle.
Justified by a theoretical bound on the target risk of the vote, we provide to MinCq a target sample labeled thanks to a perturbed variation-based self-labeling focused on the regions where the source and target marginals appear similar.
We also study the influence of our self-labeling, from which we deduce an original process for tuning the hyperparameters.
Finally, our experiments show very promising results. % on a synthetic problem.
%In such a situation, it is well-known that the disagreement between classifiers is crucial to control.
%In usual \mbox{non-adaptative} supervised setting, \cite{Lacasse07} have proven a theoretical bound---the \mbox{C-bound} --- which involves this disagreement and leads to a well performing majority vote learning algorithm: MinCq \citep{MinCq}.
%In this work, we propose a framework to make use of MinCq in a domain adaptation scenario.
%This procedure takes advantage of the recent perturbed variation divergence between distributions proposed by \cite{PV}.
%Firstly, justified by a new formulation of the \mbox{C-bound}, we provide to MinCq a target sample labeled thanks to a pertubed variation-based self-labeling focused on the region where the source and target marginal distributions appear similar. Secondly, we study the influence of our self-labeling, from which we deduce an original  process for tuning the hyperparameters.
%Finally, our framework shows very promising results on a synthetic problem.
\end{abstract}

\centerline{\footnotesize {\bf Keywords: }  Machine learning; Classification; Domain adaptation; Majority vote; PAC-Bayes}

\section{Introduction}
Nowadays, due to the expansion of Internet a large amount of data is available. 
Then, many applications need to make use of supervised machine learning methods able to transfer knowledge from different information sources, which is known as transfer learning\footnote{See \citep{pan2010,Quionero-Candela2009} for surveys on transfer learning}.
In such a situation, we cannot follow the strong standard assumption in machine learning that supposes the learning and test data drawn from the same unknown distribution.
For instance, one of the tasks of the common  spam filtering problem consists in adapting a model from one user to a new one who receives significantly different emails.
This scenario, called domain adaptation (DA), arises when we aim at learning from a source distribution a well performing model on a different target distribution, for which one considers an unlabeled sample (or few labels)\footnote{The task with few target labels is sometimes referred to as semi-supervised DA, and the one without target label as unsupervised DA.}. In this paper we design a new DA framework when we have no target label. This latter situation is known to be  challenging \citep{BenDavid12}.

To address this kind of issues, several approaches exist in the literature\footnote{See \citep{margolis2011literature} for  a survey on DA.}. 
Among them, the instance weighting-based methods allow us to deal with the covariate-shift where the distributions differ only in their marginals ({\it e.g.} \cite{huang2007correcting}).
Another technique is to exploit self-labeling procedures.  However, it often relies on iterative and heavy self-labeling. For example, one of the reference methods is DASVM \citep{BruzzoneM10}. 
Concretely at each iteration, DASVM  learns a SVM classifier  from the labeled source examples, then some of them are replaced by target data auto-labeled with this SVM classifier\footnote{In DASVM, the self-labeled points correspond to those with the lowest confidence, and the deleted source points are those with the highest confidence.}. 
A third popular solution is to take advantage of a distance between distributions, with the intuition that we want to minimize this divergence while preserving good performance on the source data: If the distributions are close under this measure, then generalization ability may be ``easier'' to quantify.
%Some previous works have designed such measures to provide different theoretical analyses of DA. % \citep{BenDavid-NIPS07,MansourMR08,MansourMR09,BenDavid-MLJ2010,Zhang12,PBDA}. 
The most popular divergences, such as the {\small $\Hcal\!\Delta\!\Hcal$}-divergence of \cite{BenDavid-NIPS07,BenDavid-MLJ2010} and the discrepancy of \cite{MansourMR08}, involve the disagreement between classifiers.
Although they lead to different analyses, they enhance to the same conclusion that is the disagreement between classifiers must be controlled while keeping a good source performance.
Obviously, other divergences for evaluating how much two distributions differ exist in the literature and could be investigated in a DA scenario.
For example, we can cite the perturbed variation (PV) \citep{PV} on which we will pay our attention for designing a non-iterative self-labeling process.  
This measure is based on the following principle: Two samples are similar if each instance of one sample is close to an instance of the other sample.

In this work, we investigate the special issue of PAC-Bayesian DA introduced by \cite{PBDA}, which focuses on learning  target weighted majority votes over a set of classifiers (or voters). Their analysis %provided %by \cite{PBDA}
 stands in the class of approaches based on a divergence between distributions.
% have stutied this problem with the help of a divergence between domains.
This latter, called the domain disagreement, has been justified by a tight bound over the risk of the majority vote---~the C-bound \citep{Lacasse07}~---and has the advantage to take into account the expectation of the disagreement between pairs of voters.
%This divergence provides an interesting and new analysis of DA. %Indeed, it leads to promising results and suggests that the PAC-Bayesian approach is relevant in the DA context.
%opens the door for making use of all the PAC-Bayesian approach in a DA context.
Although their theoretical analysis is elegant and well-founded, the algorithm derived is restricted to linear classifiers. 
We then intend to design a learning framework able to deal with weighted majority votes over real-valued voters in this PAC-Bayesian DA scenario. 
With this aim in mind and knowing the \mbox{C-bound} has lead to a simple and well performing algorithm for supervised classification, called MinCq \citep{MinCq}, we extend it to DA thanks to a non-iterative self-labeling.
Firstly, we propose a new formulation of the \mbox{C-bound} suitable for every self-labeling function (which associates a label to an example).
Then, we design such a function with the help of the empirical PV. Concretely, our PV-based self-labeling focuses on the regions where the source and target marginals are closer, then it labels the (unlabeled) target sample only in these regions (see Figure~\ref{fig:transfer}, in Section \ref{sec:transfer}). This self-labeled  sample is then provided to MinCq.
Afterwards, we highlight the influence of our self-labeling, and deduce an original validation procedure. 
Finally, our framework, named \mbox{PV-MinCq}, implies good and promising results, better than a nearest neighborhood-based self-labeling, and than other DA methods.

The rest of the paper is organized as follows.
Section~\ref{sec:background} recalls the PAC-Bayesian DA setting of \citep{PBDA}, and then MinCq and its theoretical basis 
%first MinCq and its theoretical basis 
in the supervised setting \citep{MinCq}. %, and then the PAC-Bayesian analysis of DA \citep{PBDA}.
In Section~\ref{sec:contribution} we present \mbox{PV-MinCq}, our adaptive MinCq based on a PV-based self-labeling procedure. Before conlude, we experiment our framework on a synthetic problem in Section \ref{sec:experimentations}. %, and conclude in Section~\ref{sec:conclusion}.

\section{Notations and background}
\label{sec:background}
In this section, we first review the PAC-Bayesian setting in a non-adaptive setting, and then
 the results of \cite{PBDA} and \cite{MinCq}. %  which constitute the building blocks of our contribution.

\subsection{PAC-Bayesian setting in supervised learning}
We recall the usual setting of the PAC-Bayesian theory---introduced by \cite{McAllester99b}---which offers generalization bounds (and algorithms) for weighted majority votes over a set of real-valued functions, called voters.
%We recall 
%We recall the 

%algorithm MinCq \citep{MinCq} for learning a weighted majority vote over a set of real-valued functions, called voters, for binary classification tasks.
%MinCq takes its root from the PAC-Bayesian theory---first introduced by \cite{Mcallester99b}---which offers generalization bounds for such weighted majority votes.
Let $X \subseteq  \R^d$ be the input space of dimension $d$ and $Y = \{-1,+1\}$ be the output space, {\it i.e.} the set of possible labels.
$\PS$ is an unknown distribution over $X\times Y$, that we called a domain. $(\PS)^{\ms} = \bigotimes_{s=1}^{\ms} \PS$ stands for the distribution of a \mbox{$\ms$-sample}.  The marginal distribution of $\PS$  over $X$ is denoted by $\DS$. 
We consider $S = \{(\xbfs ,\ys)\}_{s=1}^{\ms}$ a \mbox{$\ms$-sample}  independent and identically distributed  ({\it i.i.d.}) according to $(\PS)^{\ms}$, commonly called the learning sample.
Let $\Hcal$ be a set of $n$ (bounded) real-valued voters such that: $\forall h  \in  \Hcal, \ h : X \to  \R$.
Given  $\Hcal$, the ingredients of the PAC-Bayesian approache are a prior distribution $\prior$ over $\Hcal$, a learning sample $S$  and a posterior distribution $\posterior$ over $\Hcal$.
Prior distribution $\prior$ models an {\it a priori} belief on what are the best voters from $\Hcal$, before observing the learning sample $S$.
Then, given the information provided by $S$, the learner aims at finding a posterior distribution $\posterior$ leading to a $\posterior$-weighted majority vote $\BQ$ over $\Hcal$ with nice generalization guarantees. $\BQ$ and its true and empirical risks are defined as follows.
\begin{definition}
Let $\Hcal$ be a set of real-valued voters. Let $\posterior$ be a distribution over $\Hcal$. The $\posterior$-weighted majority vote $\BQ$ (sometimes called the Bayes classifier) is:
\begin{align*}
\forall \xbf\in X,\ \BQ(\xbf) = \sign\left[\esp{h\sim \posterior} h(\xbf)\right].
\end{align*}
The true risk of $\BQ$ on a domain $\PS$ and its empirical risk\footnote{We express the risk with the linear loss since we deal with real-valued voters, but in the special case of $\BQ$ the linear loss is equivalent to the $0\!-\!1$-loss.} on a $\ms$-sample $S$ are respectively:
\begin{align*}
\RPS(\BQ) &%
= \frac{1}{2}\left(1-\esp{(\xbfs,\ys)\sim P} \ys\BQ(\xbfs)\right),\\
%&=\esp{(\xbfs,\ys)\sim P}  \I\left[\BQ(\xbfs)\ne \ys\right], \\
 \RS(\BQ) &%= \frac{1}{\ms} \sum_{s=1}^{\ms}  \I\left[\BQ(\xbfs)\ne \ys\right] 
=  \frac{1}{2}\left(1-\frac{1}{\ms}\sum_{s=1}^{\ms}  \ys\BQ(\xbfs)\right).
%&= \frac{1}{\ms} \sum_{s=1}^{\ms}  \I\left[\BQ(\xbfs)\ne \ys\right], 
\end{align*}
\end{definition}
Usual PAC-Bayesian analyses\footnote{Usual PAC-Bayesian analyses can be found in \citep{Mcallester03,Seeger02,Langford2005,catoni2007pac,germain2009pac}.} do not directly focus on the risk of $\BQ$, but bound the risk of the closely related stochastic Gibbs classifier $\GQ$. It predicts the label of an example $\xbf$ by first drawing a classifier $h$ from $\Hcal$ according to $\posterior$, and then it returns $h(\xbf)$. The risk of $\GQ$ corresponds thus to the expectation of the risks over $\Hcal$ according to $\posterior$:
\begin{align}
\RP(\GQ) &= \esp{h\sim \posterior} \RP(h) = \frac{1}{2}\left(1-\esp{(\xbfs,\ys)\sim P} \esp{h\sim \posterior} \ys h(\xbfs)\right). \label{eq:gibbs}
\end{align}
Note that it is well-known in the PAC-Bayesian literature that the deterministic $\BQ$ and the stochastic $\GQ$ are related by: 
\begin{align}
\label{eq:relation}
\RP(\BQ)\leq 2\RP(\GQ).
\end{align}
%Then a result on the Gibbs classifier implies a result on the majority vote $\BQ$.

\subsection{PAC-Bayesian domain adaptation of the Gibbs classifier}
\label{pbda}
Throughout the rest of this paper, we consider the PAC-Bayesian DA setting introduced by \cite{PBDA}.
The main difference between supervised learning and DA is that we have two different domains over $X\times Y$: the source domain $\PS$ and the target domain $\PT$ ($\DS$ and $\DT$ are the respective marginals over $X$).
The aim is then to learn a good model on the target domain $\PT$ knowing that we only have label information from the source domain $\PS$. 
Concretely, in the setting described in \cite{PBDA}, we have a labeled source \mbox{$\ms$-sample} \mbox{$S=\{(\xbfs,\ys)\}_{t=1}^{\ms}$} {\it i.i.d.} from $(\PS)^{\ms}$ and a target unlabeled \mbox{$\mt$-sample} $T=\{\xbft\}_{t=1}^{\mt}$ {\it i.i.d.} from $(\DT)^{\mt}$.
One thus desires  to learn from $S$ and $T$ a  weighted majority vote with lowest possible expected risk on the target domain $\RPT(\BQ)$, {\it i.e.} with good generalization guarantees on $\PT$.
%good generalization guarantees on $\PT$.
Recalling that usual PAC-Bayesian generalization bound study the risk of the Gibbs classifier, \cite{PBDA} have done an analysis of its target risk $\RPT(\GQ)$. % (as defined in Equation~\eqref{eq:gibbs}). 
Their main result is  the following theorem.
\begin{theorem}[{\small Theorem~ 4 of \citep{PBDA} applied to real-valued voters}]
\label{theo:pbda}
Let $\Hcal$ be a set of real-valued voters. For every distribution $\posterior$ over $\Hcal$, we have:
\begin{align}\label{eq:pbda}
\RPT(\GQ) \leq \RPS(\GQ) + \des(\DS,\DT) + \lambda_\posterior,
\end{align}
where $\des(\DS,\DT)$ is the domain disagreement between the marginals $\DS$ and $\DT$, and is defined by:
\begin{align}
\label{eq:dis}
\des(\DS,\DT)\! =\! \left|\esp{(h,h')\sim\posterior^2}\!  \!    \left(  \esp{\xbf_t\sim\DT}\!\! \!  h(\xbf_t) h'(\xbf_t) - \! \!  \!  \esp{\xbf_s\sim\DS} \!\!\!  h(\xbf_s)h'(\xbf_s)   \right)  \right|,
\end{align}
 and
$\lambda_\posterior$ is related\footnote{In practice, we cannot compute $\lambda_\posterior$, since it depends greatly on the unavailable target labels. We then suppose that it is negligible. % and we omit it in algorithms. 
Thus, we do not develop this point here, but more details can be found in \citep{PBDA}.} to the true labeling on $\PS$ and  $\PT$.
\end{theorem}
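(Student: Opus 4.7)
My strategy is to split the Gibbs risk $\RP(\GQ)$ into a \emph{pairwise disagreement} on the marginal (which transports across domains just by swapping $\DS$ for $\DT$) and a \emph{pairwise joint error} (which depends on the true labels of $P$ and is inherently not transportable). The announced inequality then drops out by subtracting the source and target decompositions and taking absolute values.

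\textbf{Key identity.} Everything rests on the pointwise algebraic identity, valid for all $h,h'\in\Hcal$ and any labeled $(\xbf,y)$ with $y\in\{-1,+1\}$,
\begin{align*}
\tfrac{1-yh(\xbf)}{2}+\tfrac{1-yh'(\xbf)}{2} \;=\; \tfrac{(1-yh(\xbf))(1-yh'(\xbf))}{2}+\tfrac{1-h(\xbf)h'(\xbf)}{2},
\end{align*}
which I would verify by direct expansion using $y^2=1$. Integrating both sides against $\posterior^2\otimes P$, the left-hand side equals $2\,\RP(\GQ)$ by symmetry of the two independent draws of $h$, whereas the right-hand side factors into $2\,e_{P}(\posterior)+d_{D}(\posterior)$ with
\begin{align*}
e_{P}(\posterior)\eqdef\esp{(h,h')\sim\posterior^2}\esp{(\xbf,y)\sim P}\tfrac{(1-yh(\xbf))(1-yh'(\xbf))}{4}, \qquad d_{D}(\posterior)\eqdef\esp{(h,h')\sim\posterior^2}\esp{\xbf\sim D}\tfrac{1-h(\xbf)h'(\xbf)}{2}.
\end{align*}
This yields the basic decomposition $\RP(\GQ)=e_P(\posterior)+\tfrac12\,d_D(\posterior)$, in which only the second summand can be read off from the marginal $D$.

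\textbf{Assembling the bound.} Applying the decomposition to both $\PT$ and $\PS$ and subtracting,
\begin{align*}
\RPT(\GQ)-\RPS(\GQ) \;=\; \bigl(e_{\PT}(\posterior)-e_{\PS}(\posterior)\bigr)+\tfrac12\bigl(d_{\DT}(\posterior)-d_{\DS}(\posterior)\bigr).
\end{align*}
A short computation shows the disagreement gap equals $\tfrac14\,\esp{(h,h')\sim\posterior^2}\bigl(\esp{\xbfs\sim\DS}h(\xbfs)h'(\xbfs)-\esp{\xbft\sim\DT}h(\xbft)h'(\xbft)\bigr)$, whose absolute value is exactly $\tfrac14\,\des(\DS,\DT)$ and in particular bounded by $\des(\DS,\DT)$. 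The remaining joint-error gap involves the unobservable target labels; I would bundle its absolute value into the residual term $\lambda_\posterior$. Combining the two gives $\RPT(\GQ)\le\RPS(\GQ)+\des(\DS,\DT)+\lambda_\posterior$.

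\textbf{Main obstacle.} Algebraically the proof is trivial, so the only real difficulty is \emph{finding} the right decomposition: one must notice that the pair-of-voters viewpoint, natural from the C-bound, factors a label-free marginal quantity out of the Gibbs risk. Once spotted, the rest is arithmetic. Conceptually, the unavoidable term $\lambda_\posterior$ encodes the fact that $e_{\PT}-e_{\PS}$ genuinely requires knowing the target labeling, which is why, following the PAC-Bayesian DA convention of \cite{PBDA}, one must treat $\lambda_\posterior$ as a small (negligible) quantity to turn the bound into an algorithm.
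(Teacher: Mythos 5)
Your proof is correct and follows essentially the same route as the source from which this theorem is imported (the paper itself states Theorem~\ref{theo:pbda} without proof, citing Theorem~4 of \citep{PBDA}): that proof rests on exactly your decomposition $\RP(\GQ)=e_P(\posterior)+\tfrac{1}{2}d_D(\posterior)$ of the Gibbs risk into an expected joint error and a label-free expected disagreement, followed by subtracting the source and target versions and bounding the two gaps by $\des(\DS,\DT)$ and $\lambda_\posterior$ respectively. The only cosmetic difference is the constant in front of the disagreement gap induced by the normalization chosen in Equation~\eqref{eq:dis}, which you correctly absorb via $\tfrac{1}{4}\des(\DS,\DT)\le\des(\DS,\DT)$.
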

Note that this bound reflects the usual philosophy in DA: It is well known  that a favorable situation for DA arrives when the divergence between the domains is small while achieving good source performance \citep{BenDavid-NIPS07,BenDavid-MLJ2010,MansourMR08}. 
%which is crucial to control in DA\footnote{Usually divergences in DA are based on this disagreement, as the $\Hcal\!\Delta\!\Hcal$-divergence or discrepancy \citep{BenDavid-NIPS07,BenDavid-MLJ2010,MansourMR08}},
\cite{PBDA} have then derived a first promising algorithm called PBDA for minimizing this trade-off between source risk and domain disagreement.
Although PBDA has shown the usefulness of PAC-Bayes for tackling DA, it remains specific to linear classifiers, it does not directly focus on the majority vote, and does not provide the best empirical results regarding to state-of-the-art methods.

In this paper, our goal is to tackle this drawbacks to propose a novel algorithm for learning an adaptive weighted majority vote over a set of real-valued voters.
 To do so, the point which calls our attention here is the domain disagreement of Equation~\eqref{eq:dis}.
Indeed, it finds its root in the theoretical bound (the C-bound \citep{Lacasse07}) over the (source) risk of the majority vote,  from which \citep{MinCq} have derived  an elegant and performing non-adaptive algorithm for learning a weighted majority vote over a set of real-valued voters (MinCq). % for supervised binary classification tasks.
We recall now these non-DA results, then we extend them to DA in Section~\ref{sec:newcbound}.

\subsection{MinCq a supervised algorithm for learning majority votes}
%Note that the results recalled here does not stand in the DA setting.
The classical relation between the stochastic $\GQ$ and the majority  vote $\BQ$ (Equation~\eqref{eq:relation}) can be very loose. To tackle this drawback, \cite{Lacasse07} and \cite{MinCq} have proven a recent tighter relation stated in the following in Theorem~\ref{theo:C-bound} (the \mbox{C-bound}). This result is based on the notion of $\posterior$-margin defined as follows.
\begin{definition}[\cite{MinCq}]
The $\posterior$-margin of an example $(\xbf,y)\in X\times Y$ realized on the distribution $\posterior$ of support $\Hcal$ is given by:
$ \esp{h\sim \posterior} y h(\xbf).$
\end{definition}
By definition of $\BQ$, it is easy to see that $\BQ$ correctly classifies an example $\xbfs$ if the $\posterior$-margin is strictly positive. Thus, under the convention that if $\ys \espdevant{h\sim \posterior} h(\xbfs) = 0$, then $\BQ$ commits an error on $(\xbfs,\ys)$, for every domain $\PS$ on $X\times Y$, we have:
\begin{align*}
\RPS(\BQ)  & = \Prob{(\xbfs,\ys)\sim \PS} \left(\esp{h\sim \posterior}  \ys h(\xbfs) \leq 0 \right).
\end{align*}
Knowing this, \cite{Lacasse07} and \cite{MinCq} have  proven the following C-bound over $\RP(\BQ)$ by making use of the Cantelli-Chebitchev inequality.
\begin{theorem}[\small The C-bound as expressed in \cite{MinCq}]
\label{theo:C-bound}
For all distribution $\posterior$ over  $\Hcal$, for all domain $\PS$ over $X\times Y$ of marginal (over $X$) $\DS$, if  $\esp{h\sim \posterior}\esp{(\xbfs,\ys)\sim \PS} \ys h(\xbfs)> 0$, then:
\begin{align*}
 \RPS(\BQ) \leq 1 - \frac{\left(\esp{h\sim \posterior}\esp{(\xbfs,\ys)\sim \PS} \ys h(\xbfs)\right)^2 }{\esp{(h,h')\sim \posterior^2}\esp{\xbfs\sim \DS} h(\xbfs)h'(\xbfs)}.
\end{align*}
\end{theorem}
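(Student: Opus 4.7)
The plan is to recognize the right-hand side as essentially a one-sided Chebyshev bound (Cantelli's inequality) applied to the $\posterior$-margin regarded as a real-valued random variable under $\PS$. Specifically, I would introduce $M_\posterior(\xbfs,\ys)\eqdef \esp{h\sim \posterior} \ys h(\xbfs)$ as the random variable obtained by drawing $(\xbfs,\ys)\sim \PS$, and observe that by the stated convention the event $\{\BQ \text{ errs on } (\xbfs,\ys)\}$ is exactly $\{M_\posterior(\xbfs,\ys) \leq 0\}$, so
\[
\RPS(\BQ) \;=\; \Prob{(\xbfs,\ys)\sim \PS}\!\left( M_\posterior(\xbfs,\ys) \leq 0 \right).
\]

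Next I would set $\mu \eqdef \espdevant{(\xbfs,\ys)\sim \PS} M_\posterior(\xbfs,\ys)$ and $\sigma^2 \eqdef \vardevant{(\xbfs,\ys)\sim \PS} M_\posterior(\xbfs,\ys)$. By the hypothesis $\mu>0$, so I can apply Cantelli's inequality with deviation $a=\mu$:
\[
\Prob{(\xbfs,\ys)\sim \PS}\!\left( M_\posterior \leq \mu - \mu \right) \;\leq\; \frac{\sigma^2}{\sigma^2+\mu^2} \;=\; 1-\frac{\mu^2}{\esp{(\xbfs,\ys)\sim \PS} M_\posterior(\xbfs,\ys)^2}.
\]
By Fubini and the definition of $M_\posterior$, the numerator is already $\mu^2 = \left(\esp{h\sim \posterior}\esp{(\xbfs,\ys)\sim \PS} \ys h(\xbfs)\right)^2$, matching the target.

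The only nontrivial step is identifying the second moment of $M_\posterior$ with the labels-free quantity $\esp{(h,h')\sim \posterior^2}\esp{\xbfs\sim \DS} h(\xbfs)h'(\xbfs)$. Expanding the square and swapping expectations (Fubini, using that $\Hcal$ is bounded) gives
\[
\esp{(\xbfs,\ys)\sim \PS} M_\posterior(\xbfs,\ys)^2 \;=\; \esp{(h,h')\sim \posterior^2}\esp{(\xbfs,\ys)\sim \PS} \ys^2\, h(\xbfs) h'(\xbfs).
\]
Since $\ys\in\{-1,+1\}$ we have $\ys^2=1$, so the integrand no longer depends on $\ys$ and the outer expectation collapses to the marginal $\DS$, producing exactly the denominator claimed. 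Substituting back yields the C-bound.

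The main conceptual obstacle is not the calculation itself but ensuring that Cantelli applies cleanly: the random variable must be scalar-valued and the deviation positive, which is guaranteed by the hypothesis $\mu>0$; the $\ys^2=1$ trick is what makes the resulting bound computable from unlabeled source data, and it is the key reason this variant of Chebyshev gives a bound useful in practice.
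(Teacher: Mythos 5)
Your proof is correct and follows exactly the route the paper indicates: the theorem is obtained by applying the Cantelli--Chebyshev (one-sided Chebyshev) inequality to the $\posterior$-margin viewed as a random variable over $\PS$, with the deviation set to its mean and the second moment rewritten via $\ys^2=1$ to obtain the label-free denominator. The paper does not spell out these steps itself but cites this very argument, so your proposal matches the intended proof.
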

The numerator of this bound corresponds in fact to the first moment of the \mbox{$\posterior$-margin} of $\BQ$ realized on $\PS$, which is related to the risk of the Gibbs classifier (Equation \eqref{eq:gibbs}). 
The denominator is the second moment of this \mbox{$\posterior$-margin}, which can be seen as a measure of disagreement between the voters from $\Hcal$ (the lowest this value is, the more the voters disagree) and  can be related to the domain disagreement (Equation~\eqref{eq:dis}).

In the supervised setting, \cite{MinCq} have then proposed to minimize the empirical counterpart of the C-bound for learning a good majority vote over $\Hcal$, justified by an elegant PAC-Bayesian generalization bound. 
Following this principle the authors have derived a quadratic program  called MinCq and described in Algorithm~\ref{mincq}.
Concretely, MinCq  learns a weighted majority vote by optimizing the empirical C-bound measured on the learning sample $S$: It minimizes the denominator, {\it i.e.} the disagreement (Equation~\eqref{eq:objective}), given a fixed numerator {\it i.e.} a fixed risk of the Gibbs classifier (Equation~\eqref{eq:mincq_constraint1}), under a particular regularization (Equation~\eqref{eq:mincq_constraint2})\footnote{For more technical details on MinCq please see \citep{MinCq}.}. Note that, MinCq has showed good performances on supervised classification tasks.\\
 \begin{algorithm}[t]
\begin{algorithmic}
\small
\INPUT{A $\ms$-sample $S\!=\!\{(\xbfs,\ys)\}_{s=1}^{\ms}\!\sim\! (\PS)^{\ms}$,  a set of $n$ voters $\Hcal=\{h_1,\ldots,h_n\}$, a desired margin $\mu\! >\! 0$}
\OUTPUT{$\BQ(\cdot)=\sign\left[ \mbox{\scriptsize$\displaystyle \sum_{j=1}^{n}$} \left(2\posterior_j - \tfrac{1}{n}\right)h_j(\cdot)\right]$}
\begin{align}
\label{eq:objective}
    \textbf{Solve}\ \ &\argmin{\POSTERIOR}\ \mathbf{\POSTERIOR}^T{\bf M}{\bf \POSTERIOR-A}^T{\bf \POSTERIOR},\\[-3mm]
\label{eq:mincq_constraint1} \textbf{s.t.}\ \ & \displaystyle \mathbf{m}^T{\POSTERIOR}=\frac{\mu}{2}+\frac{1}{2n\ms }\sum_{j=1}^{n}\sum_{s=1}^{\ms} \ys h_j(\xbfs)%{\cal M}_{h_j}^S
,\\[-1mm]
\label{eq:mincq_constraint2}&\displaystyle \forall j \in \{1,\ldots,n\},\quad 0 \leq \posterior_j \leq \tfrac{1}{n},
\end{align}
{\normalsize where  $\POSTERIOR  = (\posterior_1,\ldots,\posterior_{n})^{\top}$ is a weight vector,  
and $\Mbf$ is the $n \times  n$ matrix formed by {\tiny $\displaystyle \sum_{s=1}^{\ms}$}$ \frac{h_j(\xbfs )h_{j'}(\xbfs )}{\ms}$ 
for $(j,j')\! \in\! \{1,\ldots,n\}^2$, 
{\footnotesize \begin{align*} 
\mbox{and}\ \mbf&=\left( \tfrac{1}{\ms} \mbox{\tiny$\displaystyle\sum_{s=1}^{\ms}$}\ys h_1(\xbfs ),\ldots, \tfrac{1}{\ms} \mbox{\tiny$\displaystyle\sum_{s=1}^{\ms}$}\ys h_{n}(\xbfs ) \right)^{\top},\\
\mbox{and}\ \Abf &= \left(
\mbox{\tiny$\displaystyle\sum_{j=1}^{n} \sum_{s=1}^{\ms} $}\frac{h_1(\xbfs )h_{j}(\xbfs )}{n\ms} ,\ldots,   \mbox{\tiny$\displaystyle\sum_{j=1}^{n}  \sum_{s=1}^{\ms} $} \frac{h_{n}(\xbfs )h_{j}(\xbfs )}{n\ms}\right)^{\top}
\end{align*}}}
\end{algorithmic}
\caption{MinCq($S,\Hcal,\mu$) %: a quadratic program for learning $Q$-weighted majority vote
\label{mincq}}
\end{algorithm}

The key point here is that, through a DA point of view, the \mbox{C-bound}, and thus MinCq, focus on the trade-off suggested by Theorem~\ref{eq:pbda}. %: The risk of the Gibbs classifiers and the disagreement between voters.
Indeed, the definition of the domain disagreement (Equation~\eqref{eq:dis}) is related to the C-bound 
%the domain disagreement finds its root in the C-bound of Theorem~\ref{theo:C-bound} 
according to the following statement: If source and target risks of the Gibbs classifier are similar, then the source and target risks of the majority vote are similar when the deviation between the source and target voters' disagreement tends to be low.

We thus now propose to make use of the C-bound and MinCq for designing an original and general framework for learning a majority vote over a set of real-valued voters in a DA scenario.

%,  we propose to make use of the C-bound and MinCq for designing an original and more general framework for learning a majority vote over a set of real-valued voters in a DA scenario.
%The key point is that with a PAC-Bayesian DA point of view, it is easy to see that controlling the C-bound through MinCq can boil down to control the PAC-Bayesian DA bound of Theorem~\ref{theo:pbda}.
%Therefore, as the C-bound, the bound~\eqref{eq:pbda} depends both on the Gibbs classifier's risk and the voters' disagreement.
%Indeed, 

%Recalling that 

%% We now review the PAC-Bayesian DA issue  studied by  \cite{PBDA}, for which it is likewise important to control the risk of the Gibbs classifier and the disagreement between the voters.
%% Then, in Section \ref{sec:contribution}, we propose to extend MinCq for dealing with this special case of DA scenario.

 %We recall that the disagreement between the voters has to be controlled in DA (DA) tasks. Since MinCq aims at controlling this disagreement, we propose to extend it for dealing with DA. 
%Thus, we now review the PAC-Bayesian DA setting recently studied by  \cite{PBDA}.  

\section{An adaptive MinCq}
\label{sec:contribution}
%Before presenting our new  DA framework for learning a weighted majority vote over a set of real-valued voters, we discuss the choice of MinCq.

In this section, we introduce our new DA framework for learning a weighted majority vote over a set of real-valued voters.
In order to take advantage of the algorithm MinCq, we first extend the C-bound to the DA setting.

\subsection{A C-bound suitable for DA with self-labeling}
\label{sec:newcbound}
Given a labeling function $l:X\to Y$, which associates a label $y\in Y$ to an unlabeled (target) example $\xbft\sim\DT$, we propose to rewrite the C-bound as follows.

%we rewrite the C-bound with a labeling function $l:X\to Y$, which associates a label $y\!\in\! Y$ to an unlabeled (target) example $\xbft\!\sim\!\DT$. 
%Given such a function $l$, the C-bound becomes:
\begin{corollary}
\label{cor:C-bound}
For all distribution $\posterior$ over $\Hcal$, for all domain $\PT$ over $X\times Y$ of marginal (over $X$) $\DT$,  for all labeling functions $l:X\to Y$ such that $\esp{h\sim \posterior}\esp{\xbft\sim \DT} l(\xbft) h(\xbft)> 0$, we have:
\begin{align*}
 \RPT(\BQ) \leq 1 - \frac{\left(\esp{h\sim \posterior}\esp{\xbf\sim \DT} l(\xbft) h(\xbft)\right)^2 }{\esp{(h,h')\sim \posterior^2}\esp{\xbft\sim \DT} h(\xbft)h'(\xbft)}
 + \frac{1}{2} \Big|\esp{(\xbft,\yt)\sim \PT}\!\big(\yt-l(\xbft)\big)\Big|
&.
\end{align*}
\end{corollary}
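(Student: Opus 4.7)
The plan is to reduce the claim to the original (source-only) C-bound of Theorem~\ref{theo:C-bound} by treating the self-labeling function $l$ as inducing a surrogate labeled target distribution, then to absorb the mismatch between $l(\xbft)$ and the true labels $\yt$ into an additive correction term.

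First, I would define an auxiliary distribution $P_T^l$ on $X\times Y$ whose marginal over $X$ is $\DT$ and whose conditional labels are given deterministically by $l$, so that by definition
\begin{align*}
\Risk_{P_T^l}(\BQ)=\tfrac{1}{2}\bigl(1-\esp{\xbft\sim\DT} l(\xbft)\,\BQ(\xbft)\bigr).
\end{align*}
Because $l(\xbft)\in\{-1,+1\}$, we have $l(\xbft)^2=1$, and the hypothesis $\esp{h\sim\posterior}\esp{\xbft\sim\DT} l(\xbft)h(\xbft)>0$ is exactly the positivity condition required to invoke Theorem~\ref{theo:C-bound} on $P_T^l$. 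Applying that theorem then gives precisely the fractional part of the statement as an upper bound on $\Risk_{P_T^l}(\BQ)$ (the $\ys^2=1$ simplification of the denominator in the source-case proof carries over verbatim since $l(\xbft)^2=1$).

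Second, I would relate the true target risk to the surrogate one by a direct algebraic manipulation of the definitions. Adding and subtracting $\esp{\xbft\sim\DT} l(\xbft)\BQ(\xbft)$ inside $\RPT(\BQ)$ yields
\begin{align*}
\RPT(\BQ)-\Risk_{P_T^l}(\BQ)=\tfrac{1}{2}\esp{(\xbft,\yt)\sim \PT}\bigl(l(\xbft)-\yt\bigr)\BQ(\xbft),
\end{align*}
and it remains to upper-bound this gap by $\tfrac{1}{2}|\esp{(\xbft,\yt)\sim \PT}(\yt-l(\xbft))|$. Using $|\BQ(\xbft)|=1$ and combining with the first step then closes the proof.

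The main obstacle is precisely this last bounding step: since $\BQ(\xbft)$ depends on $\xbft$, pulling it out of the expectation without loss is delicate. The most transparent route I see is to apply $|\cdot|$ and the triangle inequality to get $\tfrac{1}{2}\esp|\yt-l(\xbft)|$, which for binary labels equals $\Pr_{\PT}(\yt\neq l(\xbft))$; I would then either invoke any additional assumption on $l$ that makes this quantity coincide with $\tfrac{1}{2}|\esp(\yt-l(\xbft))|$, or simply note that $|\esp X|\le \esp|X|$ so the stated correction term is at least as tight as what the triangle inequality yields directly. Either way, the two steps together produce the claimed inequality and identify the correction term as the (signed) target-expectation discrepancy between the self-labeling and the true labels.
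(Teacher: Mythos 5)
Your decomposition is exactly the one the paper uses: introduce the surrogate target risk $\RPThat(\BQ)=\tfrac{1}{2}\bigl(1-\espdevant{\xbft\sim\DT} l(\xbft)\BQ(\xbft)\bigr)$ obtained by labeling $\DT$ with $l$, apply Theorem~\ref{theo:C-bound} to that surrogate domain (your observation that $l(\xbft)^2=1$ and that the positivity hypothesis is precisely the one needed there is correct, and this first step is sound), and then control the gap $\RPT(\BQ)-\RPThat(\BQ)$. Up to the last step you are reproducing the paper's argument, which is itself a one-liner asserting the identity $\bigl|\RPT(\BQ)-\RPThat(\BQ)\bigr|=\tfrac{1}{2}\bigl|\espdevant{(\xbft,\yt)\sim\PT}(\yt-l(\xbft))\bigr|$.

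That last step is where the genuine gap lies, and you have located it correctly, but neither of your two patches closes it. As you compute, $\RPT(\BQ)-\RPThat(\BQ)=\tfrac{1}{2}\espdevant{(\xbft,\yt)\sim\PT}\bigl(l(\xbft)-\yt\bigr)\BQ(\xbft)$, and since the sign of $\BQ(\xbft)$ varies with $\xbft$, the only bound available without further assumptions is $\tfrac{1}{2}\espdevant{(\xbft,\yt)\sim\PT}\bigl|\yt-l(\xbft)\bigr|=\probdevant{\PT}\bigl(\yt\neq l(\xbft)\bigr)$. Your appeal to $|\mathbf{E}X|\le\mathbf{E}|X|$ runs in the wrong direction: it shows the \emph{stated} correction term is the \emph{smaller} of the two, i.e.\ the corollary as written is a strictly stronger claim than what the triangle inequality delivers, so it does not follow; and no additional assumption on $l$ is available in the statement to invoke. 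In fact the inequality with the signed expectation is false in general: take $\DT$ uniform on two points with $l=-\yt$ on both and $\Hcal=\{l\}$ with $\posterior$ the point mass; then $\espdevant{(\xbft,\yt)\sim\PT}(\yt-l(\xbft))=0$ and the C-bound fraction equals $1$, so the right-hand side is $0$ while $\RPT(\BQ)=1$. The paper's displayed ``equality'' silently commits the same error of dropping the $\xbft$-dependent factor $\BQ(\xbft)$ inside a signed expectation. With the correction term read as $\tfrac{1}{2}\espdevant{(\xbft,\yt)\sim\PT}|\yt-l(\xbft)|$, your two steps assemble into a complete and correct proof, and this is evidently the intended meaning (a divergence between $l$ and the true labeling), so the right conclusion is not to force the stated term but to repair it.
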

\begin{proof} The result comes directly from:\\
\centerline{$
\displaystyle \left|\RPT(\BQ) - \RPThat(\BQ)\right| =\tfrac{1}{2} \big|\esp{(\xbft,\yt)\sim \PT}(\yt-l(\xbft))\big|,
$}
where: $\RPThat(\BQ) =  \frac{1}{2}\Big(1-\esp{\xbft\sim\DT} l(\xbft)\BQ(\xbft)\Big).$
\end{proof}
We can recognize the C-bound of Theorem~\ref{theo:C-bound} where the true label $\yt$ of an example $\xbft$ is substituted by $l(\xbf)$.
The term %$ \tfrac{1}{2}(1-\espdevant{(\xbft,\yt)\sim \PT} \yt l(\xbft))$
$\tfrac{1}{2}\ \left|\espdevant{(\xbft,\yt)\sim \PT}\left(\yt-l(\xbft)\right)\right|$ 
can be seen as a divergence between the true labeling and the one provided by $l$, since it computes the gap between the labeling function and the true labeling one: The more similar $l$ and the true labeling functions, the tighter the bound is. 
Note that generalization bounds provided by \cite{MinCq} are still valid.

With a DA point of view, it is important to note that only one domain appears in this bound. If we suppose this domain is the target one, it is required to compute a relevant labeling function by making use of the information carried by the source labeled sample $S$.
To tackle the issue of defining this labeling function, that we called a self-labeling function, we follow the intuition that given a labeled source instance $(\xbfs,\ys)\in S$, we want to transfer its label $\ys$ to an unlabeled target point $\xbft$ close to $\xbfs$.
We thus propose to investigate the perturbed variation (PV) \citep{PV}, a recent measure of divergence between distributions based on this intuition.
This gives rise, in the following, to a PV-based self-labeling function, then to a self-labeled target sample on which we can apply MinCq (justified by Corollary~\ref{cor:C-bound}).

\subsection{Adaptive MinCq via PV-based self-labeling}
\label{sec:transfer}

\begin{algorithm}[t]
\begin{algorithmic}
\small
\INPUT{ $S\!=\!\{\xbfs\}_{s=1}^{\ms}$, $T\!=\!\{\xbft\}_{t=1}^{\mt}$ are unlabeled samples, a radius  $\epsilon\! >\! 0$, a distance measure $d\!:\!X\!\times\! X\! \to\! \R^+\!$}
\OUTPUT{$\widehat{PV}(S,T,\epsilon,d)$}
\STATE 1. $\!G\!\leftarrow\! \Big(V\!=\!(A,B),E\Big)$, $A\!=\!\{\xbfs\!\in\! S\}$, $B\!=\!\{\xbft\!\in\! T\}$,  $e_{st}\!\in\! E$ if $d(\xbfs,\xbft)\leq \epsilon$
\STATE 2. $\!M_{ST}\!\leftarrow$  Maximum matching on $G$
\STATE 3. $\!(S_u,T_u)\!\leftarrow$  number of unmatched vertices in $S$, resp. in $T$\\
% $\!\ \ \ \ T_u\!\leftarrow$   number of unmatched vertices in $T$
\STATE 4. Return $\widehat{PV}(S,T,\epsilon,d)=\tfrac{1}{2}\Big(\frac{S_u}{\ms}+\frac{T_u}{\mt}\Big)$
\end{algorithmic}
\caption{\footnotesize $\widehat{PV}(S, T, \epsilon, d)$ \label{PV}}
\end{algorithm}

Before designing our self-labeling, 
we recall the definition of the PV  proposed by \cite{PV}.
\begin{definition}[\cite{PV}]
Let $\DS$ and $\DT$ be two marginal distributions over $X$ and $M(\DS,\DT)$ be the set of all joint distributions over $X\times X$ with marginals $\DS$ and $\DT$. The PV {\it w.r.t.} a distance $d:X\times X\to \R^+$ and $\epsilon > 0$ is:
\begin{align*}
PV(\DS,\DT, \epsilon, d) = \inf_{\nu\in M(\DS,\DT)} \Prob{\nu}\left[d({\cal X},{\cal X'})> \epsilon\right],
\end{align*}
over all pairs $(\DS,\DT)\sim \nu$, such that the marginal of $\cal X$ (resp. $\cal X'$) is $\DS$ (resp. $\DT$). 
\end{definition}
In other words, two samples are similar if every target instance is close to a source instance.
Note that this measure is consistent and  its empirical counterpart $\widehat{PV}(S,T,\epsilon,d)$ can be efficiently computed by a maximum graph matching procedure described in Algorithm~\ref{PV} \citep{PV}.

In our self-labeling goal, we make use of the maximum graph matching $M_{ST}$  computed at step $2$ of  Algorithm~\ref{PV}. Concretely, we label the unlabeled target examples from  $T$ thanks to $M_{ST}$, with the intuition that if $\xbft\in T$ belongs to a pair $(\xbft,\xbfs)\in M_{ST}$, then $\xbft$ is affected by the true label $\ys$ of $\xbfs$. Else, we remove $\xbft$ from $T$. The self-labeled sample $\widehat{T}$ constructed is:
$$\widehat{T} = \{(\xbft,\ys) :(\xbft,\xbfs)\! \in\! M_{ST}, \xbft\in T, (\xbfs,\ys)\in S\}.$$
Actually, we restrict the adaptation to region where the source and target marginals coincide under $d$.
Then we provide $\widehat{T}$ to MinCq. Our PV-based self-labeling is illustrated on Figure~\ref{fig:transfer}, our  framework, called \mbox{PV-MinCq}, is presented in Algorithm~\ref{PV-MinCq}.

\begin{figure}[t]
\centering
\includegraphics[width=0.7\columnwidth]{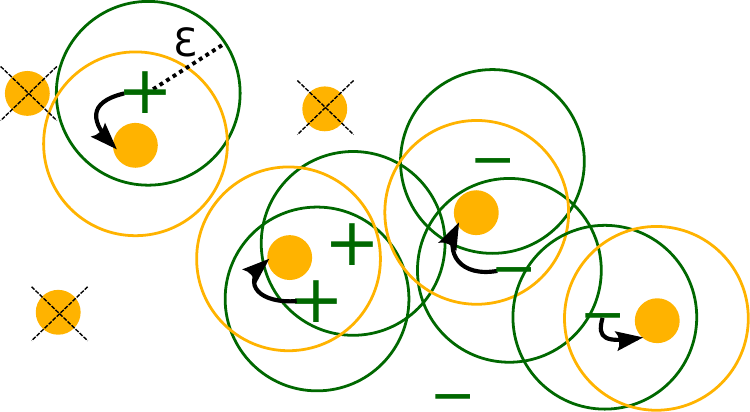}
\caption{Illustration of the PV-based self-labeling. The labeled source examples are in (dark) green, the unlabeled target examples are in (light) orange. The circles are the candidates for the matching. The arrows correspond to the matched points $M_{ST}$, and thus to the label transfer: The unmatched target examples are removed from the target sample. Note that the unmatched source and target samples indicate the PV.\label{fig:transfer}}
\end{figure}

\begin{algorithm}[t]
\begin{algorithmic}
\small
\INPUT{$S\!=\!\{(\xbfs,\ys)\}_{s=1}^{\ms}$ a source sample, $T\!=\!\{\xbft\}_{t=1}^{\mt}$  a target sample,  a set of voters $\Hcal$, a  margin $\mu\!>\!0$, a radius $\epsilon\!>\!0$,  a distance  $d\!:\!X\!\times\! X\! \to\! \R^+$}
\OUTPUT{$\BQ(\cdot)$}
\STATE 1. $M_{ST}$ $\leftarrow$ Step 1. and 2. $\widehat{PV}(S, T, \epsilon, d)$
\STATE 2. $\widehat{T} \leftarrow \{(\xbft,\ys)\! :\!(\xbft,\xbfs)\! \in\! M_{ST}, \xbt\!\in\! T, (\xbfs,\ys)\!\in\! S\}$
\STATE 3. return MinCq($\widehat{T},\Hcal,\mu$)
\end{algorithmic}
\caption{PV-MinCq($S,T,\Hcal,\mu,\epsilon,d$) \label{PV-MinCq}}
\end{algorithm}

\subsection{Analysis of the PV-based self-labeling}
\label{sec:analyse}
In this section, we discuss the impact of our PV-based self-labeling and the choice of the distance $d$.
Given a DA task, we first define the notion of a good distance.
\begin{definition}
\label{def:goodd}
Given a set of voters $\Hcal$ and $\epsilon>0$, a distance $d:X\times X\to \R^+$ is  $\epsH $-good for the DA task from $\PS$ to $\PT$, if there exists $\epsH  \geq 0$ such that:\\[1mm]
%$$\epsilon_{\Hcal} = \max_{(\xbf,\xbf')\in X\times X,\ d(\xbf,\xbf')\leq \epsilon} \left( h(\xbf)^2 + h(\xbf')^2 - 2h(\xbf)h(\xbf') \right).$$
\centerline{
$\displaystyle \epsH  = \max_{h\in\Hcal,\ (\xbft,\xbfs)\sim \DS\times\DT,\ d(\xbft,\xbfs)\leq \epsilon} \left|
h(\xbfs)-h(\xbft) \right|
.$
}
\end{definition}
Put into words, we want the following natural property: If  $\xbft$ and $\xbfs$ are close under $d$, then for every voters in $\Hcal$ the deviation between the returned values $h(\xbfs)$ and $h(\xbft)$ is low.

Given a set of voters $\Hcal$, a fixed $\epsilon >0$ and a $\epsH $-good distance $d:X\times X\to \R^+$, we consider the matching $M_{ST}$ computed at step $2$ of Algorithm~\ref{PV}.
By definition, for every $(\xbft,\xbfs) \in M_{ST}$, $\xbft$ and $\xbfs$ share the same label $\ys$ and we have $d(\xbft,\xbfs) \leq \epsilon$.
We now  study  the influence of $d$ and $\epsH $ on the PAC-Bayesian DA bound of Theorem~\ref{theo:pbda}  restricted to  $M_{ST}$. We need of the following notations.
The source and the target subsamples associated to $M_{ST}$ are respectively:
\begin{align*}
\widehat{S} &= \{(\xbfs,\ys) :(\xbft,\xbfs)\! \in\! M_{ST}, \xbft\in T, (\xbfs,\ys)\in S\},\\
\widehat{T} &= \{(\xbft,\ys) :(\xbft,\xbfs)\! \in\! M_{ST}, \xbft\in T,  (\xbfs,\ys)\in S\}.
\end{align*}
Firstly, we bound the deviation between the risks of $\GQ$  on $\Shat$ and $\That$.
For all $\posterior$ on $\Hcal$, for every pair $(\xbft,\xbfs) \in M_{ST}$ we have:
\begin{align*}
%\forall (\xbft,\xbfs) \in M_{ST},
 \left| \tfrac{1}{2}\left(1-  \ys \esp{h\sim\posterior} \! h(\xbft)\right) -  \tfrac{1}{2}\left(1-  \ys \esp{h\sim\posterior}\!   h(\xbfs)\right)\right|
=   \tfrac{1}{2}\, \left| \esp{h\sim\posterior}\!  (h(\xbft) -  h(\xbfs))\right|\\
\leq \tfrac{1}{2}\, \esp{h\sim\posterior}\! \left|  h(\xbft) -  h(\xbfs)\right| =   \tfrac{1}{2}\, \esp{h\sim\posterior}\!  \epsH   =  \tfrac{1}{2}\, \epsH.&
\end{align*}
Then, we have:
$\displaystyle \left| \RThat(\GQ) - \RShat(\GQ)\right| \leq    \tfrac{1}{2}\, \epsH .$\\[1mm]
Thus  the empirical risk of the Gibbs classifier on the source subsample $\Shat$ and the one on the self-labeled target sample $\That$ differ at most by $\tfrac{1}{2}\epsH $.
Hence the lower $\epsH $, the closer the risks are, and minimizing $\RThat(\GQ)$ is equivalent to minimize $\RShat(\GQ)$.

Secondly, similarly to the risks, we can bound the deviation between the voters' disagreement on $\Shat$ and $\That$.
For every $\posterior$ on $\Hcal$ and for every $(\xbft,\xbfs)\in M_{ST}$ , we have:
\begin{align*}
&\left|\esp{(h,h')\sim\posterior^2} \left[h(\xbfs)h'(\xbfs) - h(\xbft)h'(\xbft)\right]\right|\\
\leq\  &\bigg|\esp{(h,h')\sim\posterior^2}   \Big[(\epsH  +  h(\xbft))(\epsH   + h'(\xbft))
- h(\xbft)h'(\xbft)\Big]\bigg|\\
%= & \bigg|\esp{(h,h')\sim\posterior^2}    \Big[\epsH ^2  +  \epsH h(\xbft)  +   \epsH h'(\xbft) +   h(\xbft)h'(\xbft)  -  h(\xbft)h'(\xbft)\Big]\bigg|\\
%=\ & \left| \esp{(h,h')\sim\posterior^2}  \left[\epsH ^2 + \epsH h(\xbft) + \epsH h'(\xbft)\right]\right|\\
= \  & \bigg| \epsH ^2 +2\esp{h\sim\posterior}  \epsH h(\xbft) \bigg|
%\leq\ & \esp{(h,h')\sim\posterior^2} \left| \epsH ^2 + \epsH h(\xbft) + 
%\epsH h'(\xbft)\right]\right|
\end{align*}

Then, the empirical domain disagreement between $\Shat$ and $\That$ can be rewritten by:
\begin{align*}
\des(\Shat,\That)
 =\ & \left|\esp{(h,h')\sim\posterior^2} \esp{(\xbft,\xbfs)\in M_{ST}} 
\left[h(\xbfs)h'(\xbfs) - h(\xbft)h'(\xbft)\right]\right|\\
\leq\ &\esp{(\xbft,\xbfs)\in M_{ST}}\left|\esp{(h,h')\sim\posterior^2} 
\left[h(\xbfs)h'(\xbfs) - h(\xbft)h'(\xbft)\right]\right|\\
\leq\ &\esp{(\xbft)\in \That}\left|\epsH ^2 +2 \esp{h\sim\posterior}  \epsH h(\xbft) \right|\\
\leq\ & \epsH \left(1 + 2\esp{(\xbft)\in \That} \left| \esp{h\sim\posterior}  h(\xbft) \right|\right)
\end{align*}
In this situation, the divergence $\des(\Shat,\That)$ between the two samples can be bounded by a term depending on the confidence of the majority vote over $\That$ and on $\epsH $.\\
These results suggest that we have to minimize $\epsH $, while keeping good performances on $\That$. This confirms the legitimacy of our framework which
(i) transfers labels from the source sample to the target one in order to move closer the source and target risks of the Gibbs classifier, (ii) and then applies MinCq for optimizing the voters disagreement on the target sample (given a fixed Gibbs classifier risk on the self-labels).
However, although we can choose $\epsilon$ (and thus $\epsH $) as small as we desire, a low $\epsilon$ implies a smaller matching $M_{ST}$ and an higher empirical associated PV. In this case, the size of $\That$ tends to decrease, and then the guarantees of the Gibbs classifier decreases. 
In order to avoid this behavior, we exploit this property in the next section for designing a way to tune the hyperparameters.

\subsection{Validation of the hyperparameters}
\label{sec:validation}
A last question concerns the selection of the hyperparameters $\mu$ and $\epsilon$.
Usually in DA, one can make use of a reverse/circular validation \citep{BruzzoneM10,Zhong-ECML10}, with the idea that if the domains are close/related then a reverse classifier, learned from the target data labeled with the current classifier, has to perform well on the source data (see Figure \ref{fig:reverse} for the intuition). 
However, for \mbox{PV-MinCq} our first step is to transfer the source labels. As we have seen previously,  our main goal is then to validate this transfer, and 
%on the target unlabeled data, without learning a model. 
%Thus,
 the reverse validation appears less relevant.
\begin{figure}[t]
\centering
\includegraphics[width=0.77\columnwidth]{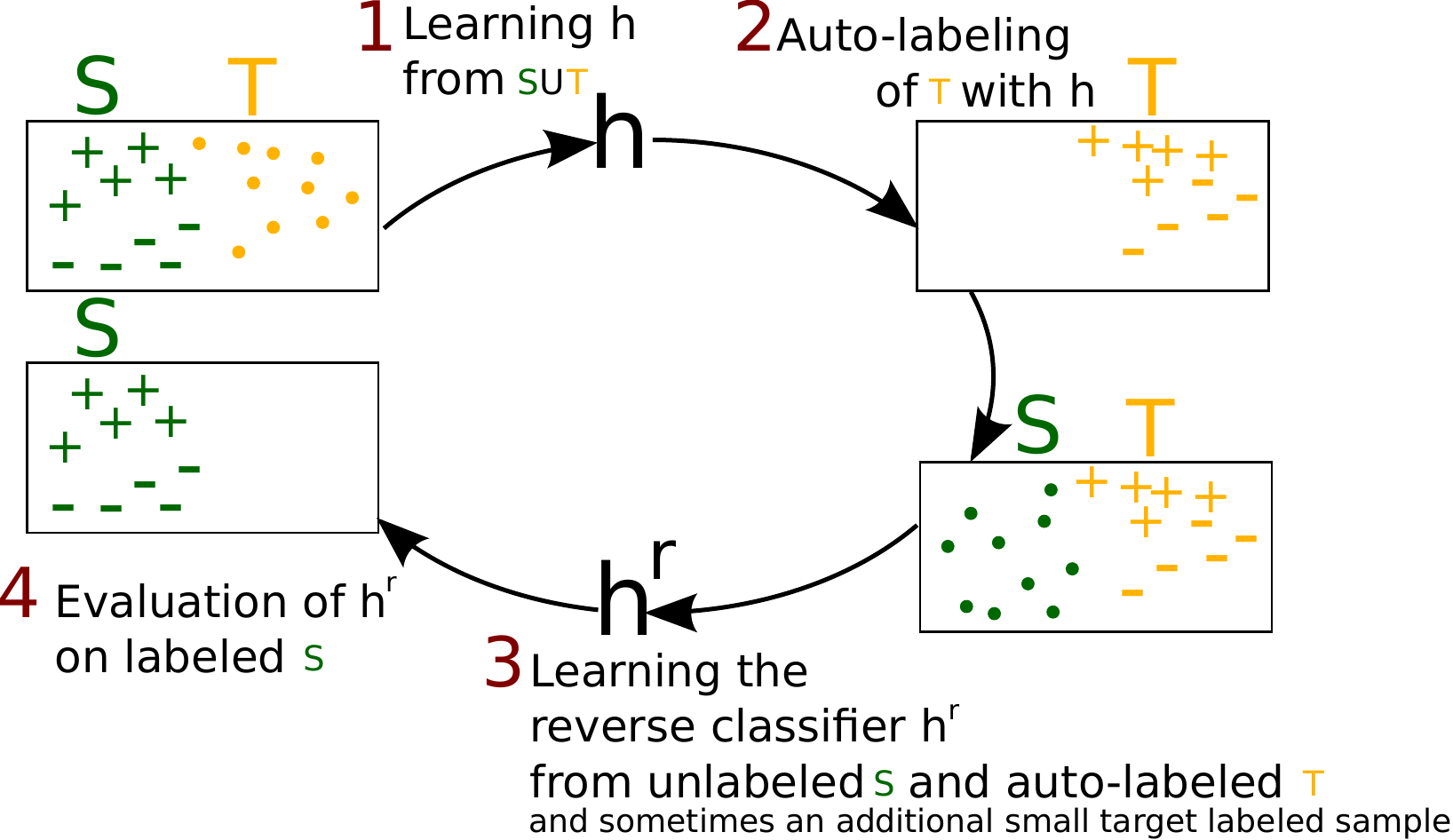}
\caption{The principle of the reverse validation.\label{fig:reverse}}
\end{figure}
We thus propose to deal with our analysis by making use of all the available information,{\it i.e} the original samples $S$ and $T$.
In this context,  on the one hand, since we have shown that the domain disagreement can be upper-bounded by a term depending on the PV-based self-labeling (and the confidence of the majority vote on the target self-labels), the PV between  $\DS$ and $\DT$ has to be controlled: The lower the PV, the more similar the samples are.
However, minimizing the PV regarding to $\epsilon$ can be easy: It is possible to find a high\footnote{{\it e.g.}, if $\epsilon$ equals to the highest distance between source and target example.} value for $\epsilon$, leading to a small PV.
On the other hand, to compensate this behavior, we thus have to control the performance. Indeed, the higher $\epsilon$ is, the higher the distance between source and target examples of the same pair (from $M_{ST}$) is.
This implies that these points are less similar, which tends to increase the deviation between the source and target risks, and could imply a loss of performances on the original source sample. 
Therefore, a relevant PV-based self-labeling corresponds to the one enable to optimize the following trade-off:\\[1.5mm]
\centerline{$\RS(\BQ) + \widehat{PV}(S,T,\epsilon,d),$}\\[1.5mm]
where $\RS(\BQ)$ is the empirical risk on the source sample,
and $\widehat{PV}(S,T,\epsilon,d)$ is the empirical PV between $S$ and $T$.
It is worth noting that this process can also be seen in connection with the philosophy of DA: We want to minimize the divergence between the  domains while keeping good source performance. % on the source one.

Concretely, for every set of possible parameters $(\mu, \epsilon)$ and given $k$-folds on the source sample ($S\!=\!\cup_{i=1}^kS_i$), \mbox{PV-MinCq} learns a majority vote  $\BQ$ from  the $k\!-\!1$ labeled folds of $S$ (and $T$). Then, we evaluate $\BQ$ on the last $k^{th}$ fold. Its empirical risk corresponds then to the mean of the error over the $k$-folds:\\
\centerline{
$\displaystyle \RS(\BQ)=\frac{1}{k}\sum_{i=1}^k \Risk_{S_i}(\BQ),
$}\\
and $\widehat{PV}(S,T,\epsilon,d)$ is computed by Algorithm~\ref{PV}.

\section{Experimental results}
\label{sec:experimentations}

%\subsection{Protocol}
In this section, we evaluate  our framework \mbox{PV-MinCq} for learning a  vote over a set of Gaussian kernels defined from the learning sample.
We compare it to the following methods: %different approaches:
%\begin{itemize}
\\ \textbullet$\ $  SVM only from the source sample, {\it i.e.} without adaptation;
\\ \textbullet$\ $  MinCq \citep{MinCq} only from the source sample;%, without adaptation.
\\ \textbullet$\ $  TSVM, the semi-supervised transductive-SVM\footnote{TSVM has not been proposed for DA scenario, but provides in general very interesting results in DA.}, \citep{Joachims99} learns from the two domains; 
\\ \textbullet$\ $  DASVM \citep{BruzzoneM10}, an iterative self-labeling DA algorithm;
%DA algorithm based on an iterative self-labeling of the target data.
\\ \textbullet$\ $  DASF \citep{DASF12}, a DA algorithm minimizing a trade-off between a  divergence and a source risk based on the analysis of \cite{BenDavid-NIPS07};
\\ \textbullet$\ $  PBDA \citep{PBDA}, the PAC-Bayesian DA algorithm for minimizing the bound of Theorem~\ref{theo:pbda};
\\ \textbullet$\ $  \mbox{PV-SVM}, for which we compute the self-labeling of the target data as for \mbox{PV-MinCq}, and then we apply a classical SVM on these self-labeled data;
\\ \textbullet$\ $  \mbox{NN-MinCq} that uses a $k$-NN based self-labeling: We label a target point with a $k$-NN classifier of which the prototypes comes from the source sample ($k$ is tuned).\\
%\end{itemize}
%We use a Gaussian kernel for all the methods, in other words we want to learn a majority vote over the set of Gaussian kernels defined from the learning sample.
To compute the \mbox{PV-based}  self-labeling, we make use of the euclidean distance.
Each parameter is selected with a grid search via a classical $5$-folds cross-validation for SVM, MinCq and TSVM, a reverse $5$-folds cross-validation 
for DASVM, DASF, PBDA and \mbox{NN-MinCq}, and the \mbox{PV-based} validation procedure described in Section \ref{sec:validation} for \mbox{PV-SVM} and \mbox{PV-MinCq}.

\begin{figure}[t]
\centering
\includegraphics[width=0.23\columnwidth]{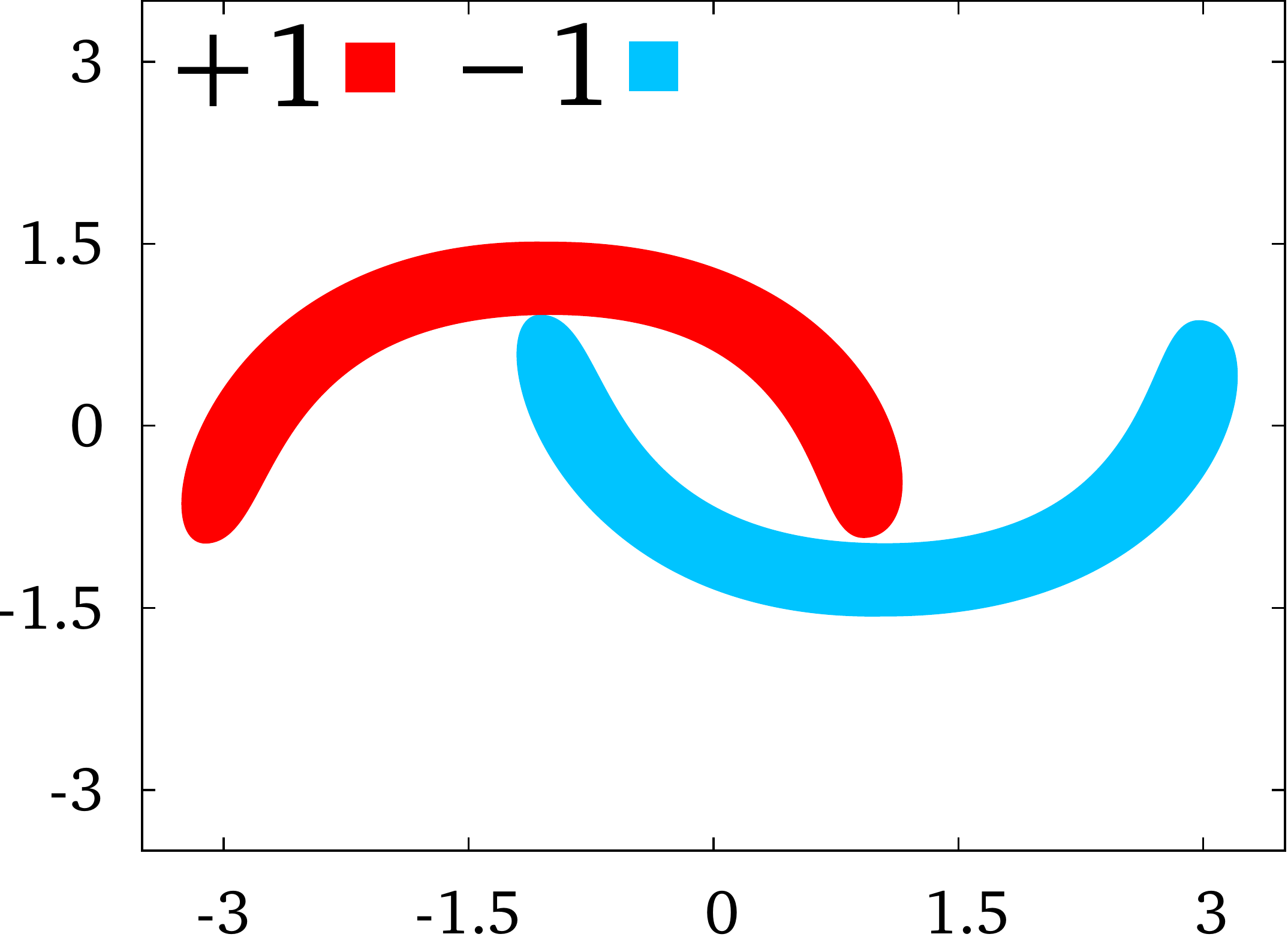}\hfil\includegraphics[width=0.23\columnwidth]{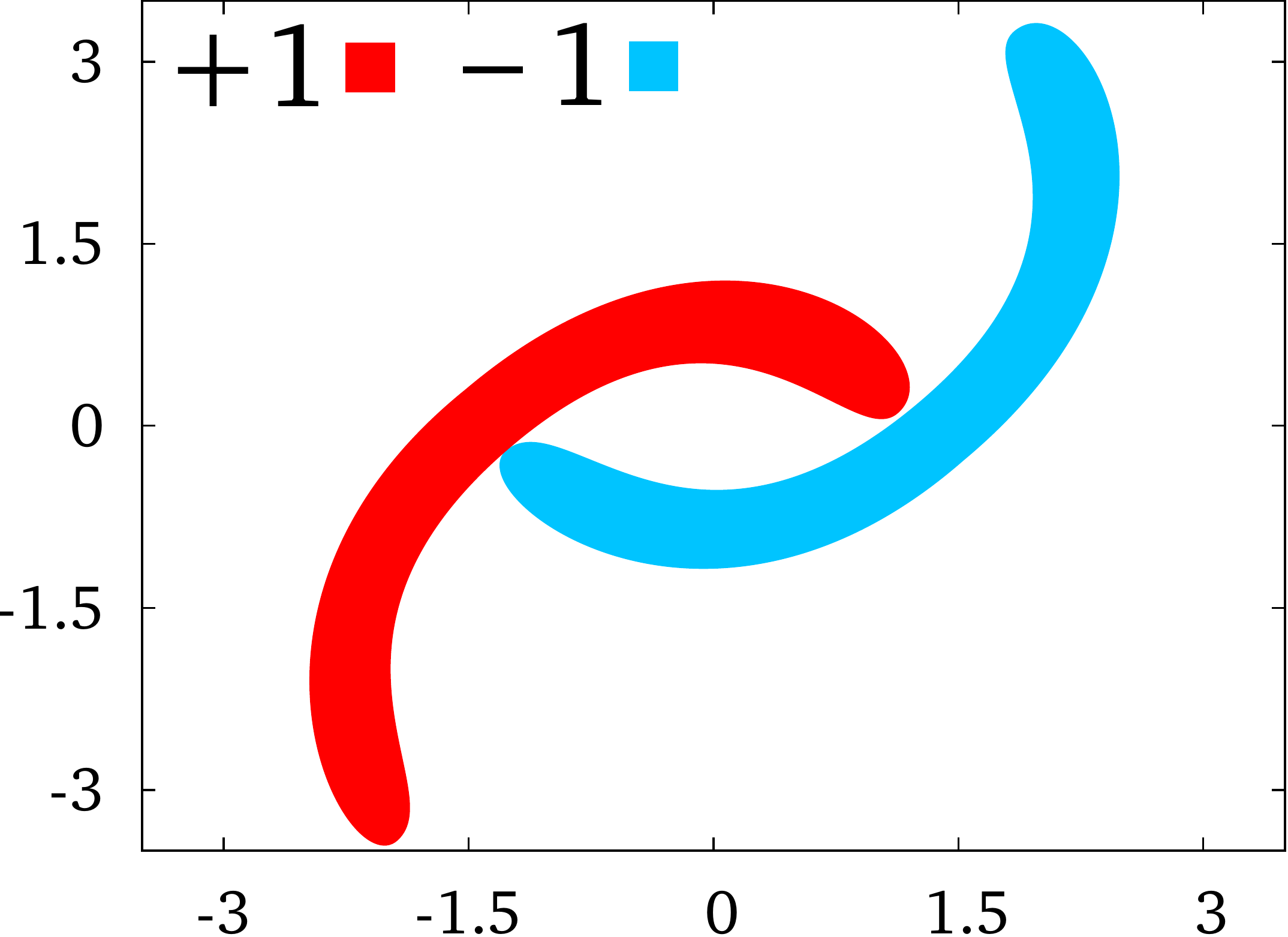}\includegraphics[width=0.23\columnwidth]{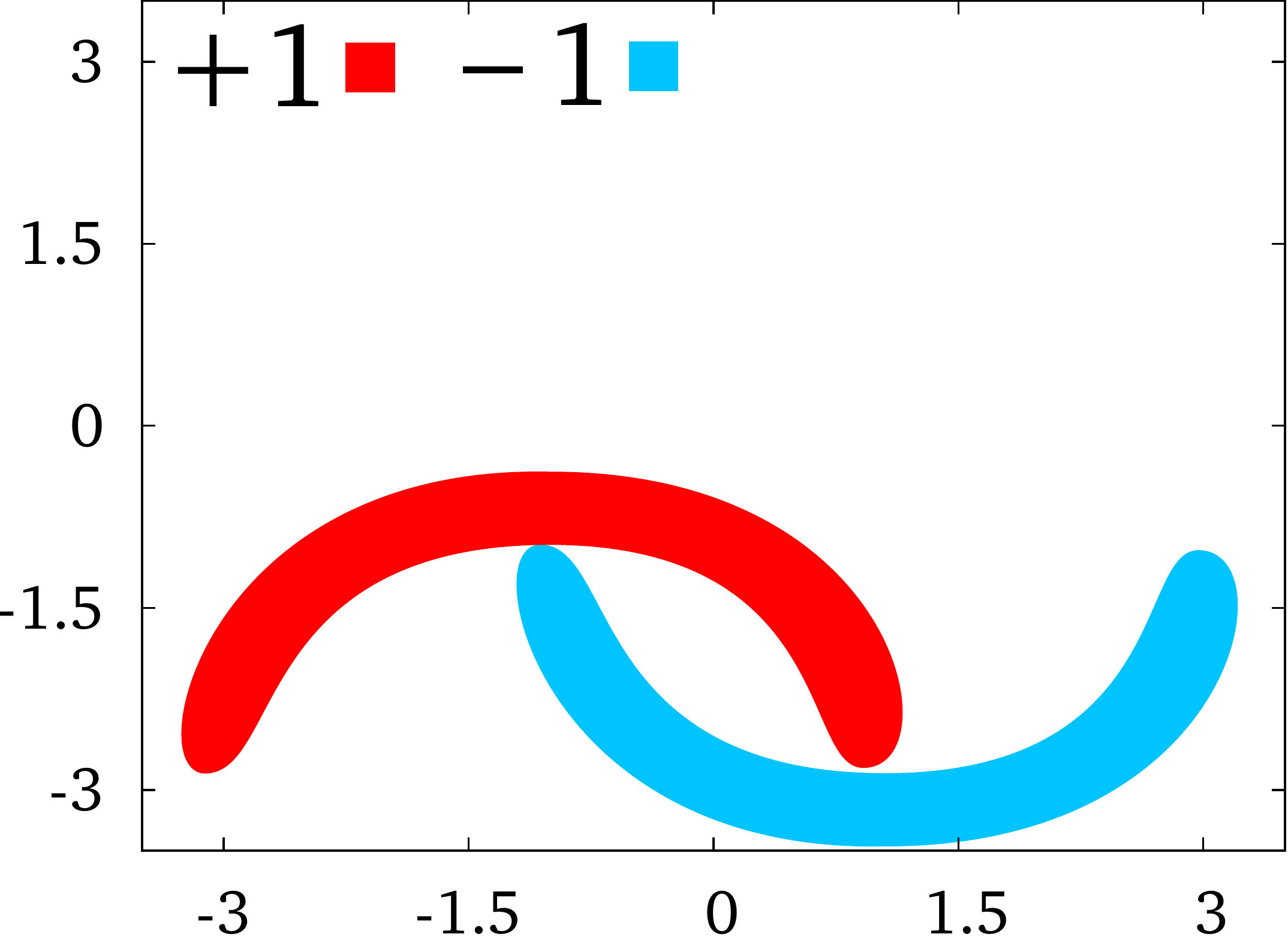}
\caption{On the left: the source domain. On the right: a target domain with a $40\degree$ rotation angle, and the translated target domain.\label{fig:moon}}
\end{figure}
\begin{table}[t]
\scriptsize \centering
\caption{\label{tab:resultats}Average accuracy results on ten runs for the seven rotations, and for the translation ({\it trans.}). NN-MinCq implies no result for the latter since, in this case, the self-label is the same for all target examples.}
 \begin{tabular}{@{}c@{}||c|c|c|c|c|c|c||@{}c@{}}
          \toprule
          Rot. angle&\hspace*{-0.2cm}$20\degree$\hspace*{-0.2cm}&\hspace*{-0.2cm}$30\degree$\hspace*{-0.2cm}&\hspace*{-0.2cm}$40\degree$\hspace*{-0.2cm}&\hspace*{-0.2cm}$50\degree$\hspace*{-0.2cm}&\hspace*{-0.2cm}$60\degree$\hspace*{-0.2cm}&\hspace*{-0.2cm}$70\degree$\hspace*{-0.2cm}&\hspace*{-0.2cm}$80\degree$\hspace*{-0.2cm}&$\,${\it trans.}\\
%\hspace*{-0.17cm}$\quad90\degree{}\quad$ \hspace*{-0.17cm}\\
          \bottomrule
          \toprule
 SVM            &  $89.6$     & $76$     &  $68.8$    &  $60$   & $47.2$  &  $26.1$    & $19.2$  & $50.6$\\%     & $17.2$  \\
          \midrule
          MinCq & $92.1$ & $78.2$ & 	$69.8$ &	$61$ &$ 50.1$ &$40.7$ &	$32.7$ & $50.7$ \\% & $26$ \\
\bottomrule
          \toprule
          TSVM          &   $\mathbf{100}$     &  $78.9$    &  $74.6$    &  $70.9$   & $64.7$ & $21.3$    &  $18.9$   & $94.9$  \\ %&  $17.49$  \\
\midrule
DASVM  & $\mathbf{100}$ & $78.4$& $71.6$&$66.6$ &$61.6$ & $25.3$ & $21.1$ & $50.1$ \\ %&$18.06$\\
\midrule
PBDA & $90.6$  & $89.7$ & $77.5$ & $58.8$ & $42.4$ & $37.4$ & $39.6$ & $85.9$ \\ %& $19.4$ \\ 
\midrule
         DASF          &  $98.3$ & $92.1$ & $83.9$ & $70.2$ & $54.7$ &$ 43$ &$38.9$ & $82.8$\\ %& $35$\\
  %DASF & $99.8$ & $99.6$ & $91$ & $81.3$ & $65.2$ & $62$ & $\mathbf{60.9}$ \\ %& $59.1$\\ 

\midrule
\mbox{PV-SVM}  &     $94.2$     & $82.5$     &  $75.1$    &  $67.7$   & $55.2$  &  $43.6$    & $30.3$ & $97.1$
\\
\midrule
\mbox{NN-MinCq}$\,$& $97.7$ & $83.7$ & $77.7$ & $69.2$ & $58.1$ & $47.9$ & $42.1$ & $\varnothing
$\\
\bottomrule
          \toprule
\mbox{PV-MinCq} & $99.9$ & $\mathbf{99.7}$ & $\mathbf{99}$ & $\mathbf{91.6}$ & $\mathbf{75.3}$ & $\mathbf{66.2}$ &  $\mathbf{58.9}$ & $\mathbf{97.4}$ \\ %& $\mathbf{46.3}$ \\
\bottomrule
\end{tabular}
\end{table}

We tackle the  binary classification task called ``inter-twinning moon''.
The source domain is problem where each moon corresponds to one label (see Figure~\ref{fig:moon}).
We consider seven different target domains by rotating anticlockwise the source one according to seven rotations angles from $20\degree$ to $80\degree$. The higher the angle, the more difficult the adaptation is.
We also consider one target domain as a translation of the source one.
We randomly generate $150$ positives examples and $150$ negatives examples for each domain. 
To estimate the generalization error of our approach, each algorithm is evaluated on an independent test set of $1,500$ target instances.
Each DA task is repeated ten times. We report the average correct classification percentage on Table~\ref{tab:resultats}. We make the following remarks.

%\subsection{Results}
First, \mbox{PV-MinCq} outperforms on average the others, and appears more robust to change of density (\mbox{NN-MinCq} and MinCq appears also more robust).
We also observe that SVM, respectively \mbox{PV-SVM}, provides lower performance than MinCq, respectively \mbox{PV-MinCq}. 
These observations confirm the necessity of taking into account the voters' disagreement. % between voters in DA. 
Second, the \mbox{PV-based} labeling implies better results than the NN one. For the translation task the labels affected by the NN-based self-labeling are the same for every target example.
Unlike a \mbox{NN-based} labeling, using the matching implied by the PV appears to be a colloquial way to control the divergence between domains since it clearly focuses on the highest density region by removing the target points without matched source point, in other words on regions where the domains are close.
These results confirm that the PV coupled with MinCq provides a nice solution to tackle DA for learning a target majority vote.% when we want to learn a good and robust target majority vote.% on a target domain.

%Note that the validation process used for \mbox{PV-MinCq} is probably more  %of divergence in the context of DA.
%Moreover on Fig. \ref{fig:tradeoff}, we clearly see the trade-off between the difficulty of the task and the minimization of the source risk in action: When the DA task is feasible DA-PBGD prefers to minimize the domain disagreement even if it implies an increase of the empirical source risk, but when  this minimization becomes hard, {\it i.e.} the complexity of the task is high, it prefers to focus only on the empirical source risk. 

\section{Discussion and future work}
\label{sec:conclusion}
We design a general PAC-Bayesian domain adaptation (DA) framework---\mbox{PV-MinCq}---for learning a target weighted majority vote over a set of real-valued functions.
To do so, \mbox{PV-MinCq} is based on MinCq, a quadratic program for minimizing the \mbox{C-bound} over the majority vote's risk by controlling the disagreement between voters known to be crucial in DA.
The idea is to focus on regions where the marginals are closer in order to transfer the source labels to the unlabeled target examples (only in these regions). Then we apply MinCq on these self-labeled points, justified by a new version of the \mbox{C-bound} formulated to deal with self-labeling functions.
We propose a the self-labeling process which has the originality to be defined thanks to the perturbed variation (PV)  between the source and target marginals.
Moreover, it has the clear advantage to be \mbox{non-iterative}, unlike usual self-labeling DA algorithms that are generally based on iterative procedures.
As a consequence, \mbox{PV-MinCq} is easier to apply.
Subsequently, we highlight the necessity of controlling the trade-off between low empirical PV and low source risk, that leads to an original hyperparameters selection.
Finally, the empirical results are promising, and raise to exciting directions.

 For instance, PV-MinCq could be useful for efficiently combining several  data descriptions such as in multiview or multimodality learning\footnote{{\it e.g.} a document can be represented by different descriptors.}.
Indeed, in such a situation one natural solution consists in (i) learning a classifier from each description, and (ii) learning\footnote{Sometimes referred as stacking or classifier fusion.} a majority vote over the learned classifiers. Thus, for adapting a majority vote from a source corpus to a target one, (ii) can be performed by \mbox{PV-MinCq}.

Given a DA task, another interesting direction is the design (or the learning) of a \mbox{$\epsH$-good} distance (or metric) $d$ to provide a specific self-labeling, allowing  a more accurate  computation of the PV. Indeed, our analysis  of the self-labeling suggests the requirement of a distance $d$ implying a pertinent measure of closeness in the semantic space involved by the voters.

Lastly, our results raise the question of the usefulness of the PV to learn shared features or points across domains ({\it e.g.} as done in \citep{GongGS13,LinAZ13}) by identifying which source samples are relevant for the target task. 

%% \section*{Acknowledgments}
%% This work was in parts funded by the European Research Council under the European Unions Seventh Framework Programme (FP7/2007-2013)/ERC grant agreement no 308036.

\bibliographystyle{model2-names}
\bibliography{main_pvmincq}

\begin{thebibliography}{24}
\expandafter\ifx\csname natexlab\endcsname\relax\def\natexlab#1{#1}\fi
\providecommand{\url}[1]{\texttt{#1}}
\providecommand{\href}[2]{#2}
\providecommand{\path}[1]{#1}
\providecommand{\DOIprefix}{doi:}
\providecommand{\ArXivprefix}{arXiv:}
\providecommand{\URLprefix}{URL: }
\providecommand{\Pubmedprefix}{pmid:}
\providecommand{\doi}[1]{\href{http://dx.doi.org/#1}{\path{#1}}}
\providecommand{\Pubmed}[1]{\href{pmid:#1}{\path{#1}}}
\providecommand{\bibinfo}[2]{#2}
\ifx\xfnm\relax \def\xfnm[#1]{\unskip,\space#1}\fi
%Type = Article
\bibitem[{{Ben-David} et~al.(2010){Ben-David}, Blitzer, Crammer, Kulesza,
  Pereira and Vaughan}]{BenDavid-MLJ2010}
\bibinfo{author}{{Ben-David}, S.}, \bibinfo{author}{Blitzer, J.},
  \bibinfo{author}{Crammer, K.}, \bibinfo{author}{Kulesza, A.},
  \bibinfo{author}{Pereira, F.}, \bibinfo{author}{Vaughan, J.},
  \bibinfo{year}{2010}.
\newblock \bibinfo{title}{A theory of learning from different domains}.
\newblock \bibinfo{journal}{Machine Learning Journal} \bibinfo{volume}{79},
  \bibinfo{pages}{151--175}.
%Type = Inproceedings
\bibitem[{{Ben-David} et~al.(2007){Ben-David}, Blitzer, Crammer and
  Pereira}]{BenDavid-NIPS07}
\bibinfo{author}{{Ben-David}, S.}, \bibinfo{author}{Blitzer, J.},
  \bibinfo{author}{Crammer, K.}, \bibinfo{author}{Pereira, F.},
  \bibinfo{year}{2007}.
\newblock \bibinfo{title}{Analysis of representations for domain adaptation.},
  in: \bibinfo{booktitle}{Proceedings of Conference on Neural Information
  Processing Systems}, pp. \bibinfo{pages}{137--144}.
%Type = Inproceedings
\bibitem[{Ben-David and Urner(2012)}]{BenDavid12}
\bibinfo{author}{Ben-David, S.}, \bibinfo{author}{Urner, R.},
  \bibinfo{year}{2012}.
\newblock \bibinfo{title}{On the hardness of domain adaptation and the utility
  of unlabeled target samples}, in: \bibinfo{booktitle}{Proceedings of
  Algorithmic Learning Theory}, pp. \bibinfo{pages}{139--153}.
%Type = Article
\bibitem[{Bruzzone and Marconcini(2010)}]{BruzzoneM10}
\bibinfo{author}{Bruzzone, L.}, \bibinfo{author}{Marconcini, M.},
  \bibinfo{year}{2010}.
\newblock \bibinfo{title}{Domain adaptation problems: A {DASVM} classification
  technique and a circular validation strategy}.
\newblock \bibinfo{journal}{IEEE Transactions on Pattern Analysis and Machine
  Intelligence} \bibinfo{volume}{32}, \bibinfo{pages}{770--787}.
%Type = Book
\bibitem[{Catoni(2007)}]{catoni2007pac}
\bibinfo{author}{Catoni, O.}, \bibinfo{year}{2007}.
\newblock \bibinfo{title}{{PAC-B}ayesian supervised classification: the
  thermodynamics of statistical learning}. volume~\bibinfo{volume}{56}.
\newblock \bibinfo{publisher}{Institute of Mathematical Statistic}.
%Type = Inproceedings
\bibitem[{Germain et~al.(2013)Germain, Habrard, Laviolette and Morvant}]{PBDA}
\bibinfo{author}{Germain, P.}, \bibinfo{author}{Habrard, A.},
  \bibinfo{author}{Laviolette, F.}, \bibinfo{author}{Morvant, E.},
  \bibinfo{year}{2013}.
\newblock \bibinfo{title}{{PAC-B}ayesian domain adaptation bound with
  specialization to linear classifiers}, in: \bibinfo{booktitle}{Proceedings of
  International Conference on Machine Learning}.
%Type = Inproceedings
\bibitem[{Germain et~al.(2009)Germain, Lacasse, Laviolette and
  Marchand}]{germain2009pac}
\bibinfo{author}{Germain, P.}, \bibinfo{author}{Lacasse, A.},
  \bibinfo{author}{Laviolette, F.}, \bibinfo{author}{Marchand, M.},
  \bibinfo{year}{2009}.
\newblock \bibinfo{title}{{PAC-B}ayesian learning of linear classifiers}, in:
  \bibinfo{booktitle}{{Proceedings of International Conference on Machine
  Learning}}.
%Type = Inproceedings
\bibitem[{Gong et~al.(2013)Gong, Grauman and Sha}]{GongGS13}
\bibinfo{author}{Gong, B.}, \bibinfo{author}{Grauman, K.},
  \bibinfo{author}{Sha, F.}, \bibinfo{year}{2013}.
\newblock \bibinfo{title}{Connecting the dots with landmarks: Discriminatively
  learning domain-invariant features for unsupervised domain adaptation}, in:
  \bibinfo{booktitle}{Proceedings of International Conference on Machine
  Learning}.
%Type = Inproceedings
\bibitem[{Harel and Mannor(2012)}]{PV}
\bibinfo{author}{Harel, M.}, \bibinfo{author}{Mannor, S.},
  \bibinfo{year}{2012}.
\newblock \bibinfo{title}{{The Perturbed Variation}}, in:
  \bibinfo{booktitle}{Proceedings of Conference on Neural Information
  Processing Systems}, pp. \bibinfo{pages}{1943--1951}.
%Type = Article
\bibitem[{Huang et~al.(2007)Huang, Smola, Gretton, Borgwardt and
  Scholkopf}]{huang2007correcting}
\bibinfo{author}{Huang, J.}, \bibinfo{author}{Smola, A.J.},
  \bibinfo{author}{Gretton, A.}, \bibinfo{author}{Borgwardt, K.M.},
  \bibinfo{author}{Scholkopf, B.}, \bibinfo{year}{2007}.
\newblock \bibinfo{title}{Correcting sample selection bias by unlabeled data}.
\newblock \bibinfo{journal}{Advances in neural information processing systems}
  \bibinfo{volume}{19}, \bibinfo{pages}{601}.
%Type = Inproceedings
\bibitem[{Joachims(1999)}]{Joachims99}
\bibinfo{author}{Joachims, T.}, \bibinfo{year}{1999}.
\newblock \bibinfo{title}{Transductive inference for text classification using
  support vector machines}, in: \bibinfo{booktitle}{Proceedings of
  International Conference on Machine Learning}, pp. \bibinfo{pages}{200--209}.
%Type = Inproceedings
\bibitem[{Lacasse et~al.(2007)Lacasse, Laviolette, Marchand, Germain and
  Usunier}]{Lacasse07}
\bibinfo{author}{Lacasse, A.}, \bibinfo{author}{Laviolette, F.},
  \bibinfo{author}{Marchand, M.}, \bibinfo{author}{Germain, P.},
  \bibinfo{author}{Usunier, N.}, \bibinfo{year}{2007}.
\newblock \bibinfo{title}{{PAC-B}ayes bounds for the risk of the majority vote
  and the variance of the {G}ibbs classifier}, in:
  \bibinfo{booktitle}{{Proceedings of Conference on Neural Information
  Processing Systems}}.
%Type = Article
\bibitem[{Langford(2005)}]{Langford2005}
\bibinfo{author}{Langford, J.}, \bibinfo{year}{2005}.
\newblock \bibinfo{title}{Tutorial on practical prediction theory for
  classification}.
\newblock \bibinfo{journal}{Journal of Machine Learning Research}
  \bibinfo{volume}{6}, \bibinfo{pages}{273--306}.
%Type = Inproceedings
\bibitem[{Laviolette et~al.(2011)Laviolette, Marchand and Roy}]{MinCq}
\bibinfo{author}{Laviolette, F.}, \bibinfo{author}{Marchand, M.},
  \bibinfo{author}{Roy, J.F.}, \bibinfo{year}{2011}.
\newblock \bibinfo{title}{From {{PAC}-Bayes} bounds to quadratic programs for
  majority votes}, in: \bibinfo{booktitle}{Proceedings of International
  Conference on Machine Learning}.
%Type = Article
\bibitem[{Lin et~al.(2013)Lin, An and Zhang}]{LinAZ13}
\bibinfo{author}{Lin, D.}, \bibinfo{author}{An, X.}, \bibinfo{author}{Zhang,
  J.}, \bibinfo{year}{2013}.
\newblock \bibinfo{title}{Double-bootstrapping source data selection for
  instance-based transfer learning}.
\newblock \bibinfo{journal}{Pattern Recognition Letters} \bibinfo{volume}{34},
  \bibinfo{pages}{1279--1285}.
%Type = Inproceedings
\bibitem[{Mansour et~al.(2008)Mansour, Mohri and Rostamizadeh}]{MansourMR08}
\bibinfo{author}{Mansour, Y.}, \bibinfo{author}{Mohri, M.},
  \bibinfo{author}{Rostamizadeh, A.}, \bibinfo{year}{2008}.
\newblock \bibinfo{title}{Domain adaptation with multiple sources}, in:
  \bibinfo{booktitle}{Proceedings of Conference on Neural Information
  Processing Systems}, pp. \bibinfo{pages}{1041--1048}.
%Type = Techreport
\bibitem[{Margolis(2011)}]{margolis2011literature}
\bibinfo{author}{Margolis, A.}, \bibinfo{year}{2011}.
\newblock \bibinfo{title}{A Literature Review of Domain Adaptation with
  Unlabeled Data}.
\newblock \bibinfo{type}{Technical Report}. University of Washington.
%Type = Inproceedings
\bibitem[{McAllester(1999)}]{McAllester99b}
\bibinfo{author}{McAllester, D.A.}, \bibinfo{year}{1999}.
\newblock \bibinfo{title}{{PAC}-{B}ayesian model averaging}, in:
  \bibinfo{booktitle}{Proceedings of conference on Computational learning
  theory}, pp. \bibinfo{pages}{164--170}.
%Type = Inproceedings
\bibitem[{McAllester(2003)}]{Mcallester03}
\bibinfo{author}{McAllester, D.A.}, \bibinfo{year}{2003}.
\newblock \bibinfo{title}{Simplified {PAC-B}ayesian margin bounds}, in:
  \bibinfo{booktitle}{{Proceedings of Conference on Computational Learning
  Theory}}, pp. \bibinfo{pages}{203--215}.
%Type = Article
\bibitem[{Morvant et~al.(2012)Morvant, Habrard and Ayache}]{DASF12}
\bibinfo{author}{Morvant, E.}, \bibinfo{author}{Habrard, A.},
  \bibinfo{author}{Ayache, S.}, \bibinfo{year}{2012}.
\newblock \bibinfo{title}{{P}arsimonious unsupervised and semi-supervised
  domain adaptation with good similarity functions}.
\newblock \bibinfo{journal}{Knowledge and Information Systems}
  \bibinfo{volume}{33}, \bibinfo{pages}{309--349}.
%Type = Article
\bibitem[{Pan and Yang(2010)}]{pan2010}
\bibinfo{author}{Pan, S.J.}, \bibinfo{author}{Yang, Q.}, \bibinfo{year}{2010}.
\newblock \bibinfo{title}{A survey on transfer learning}.
\newblock \bibinfo{journal}{Knowledge and Data Engineering, IEEE Transactions
  on} \bibinfo{volume}{22}, \bibinfo{pages}{1345--1359}.
%Type = Book
\bibitem[{Quionero-Candela et~al.(2009)Quionero-Candela, Sugiyama, Schwaighofer
  and Lawrence}]{Quionero-Candela2009}
\bibinfo{author}{Quionero-Candela, J.}, \bibinfo{author}{Sugiyama, M.},
  \bibinfo{author}{Schwaighofer, A.}, \bibinfo{author}{Lawrence, N.},
  \bibinfo{year}{2009}.
\newblock \bibinfo{title}{Dataset Shift in Machine Learning}.
\newblock \bibinfo{publisher}{MIT Press}.
%Type = Article
\bibitem[{Seeger(2002)}]{Seeger02}
\bibinfo{author}{Seeger, M.}, \bibinfo{year}{2002}.
\newblock \bibinfo{title}{{PAC-B}ayesian generalization error bounds for
  {G}aussian process classification}.
\newblock \bibinfo{journal}{Journal of Machine Learning Research}
  \bibinfo{volume}{3}, \bibinfo{pages}{233--269}.
%Type = Inproceedings
\bibitem[{Zhong et~al.(2010)Zhong, Fan, Yang, Verscheure and
  Ren}]{Zhong-ECML10}
\bibinfo{author}{Zhong, E.}, \bibinfo{author}{Fan, W.}, \bibinfo{author}{Yang,
  Q.}, \bibinfo{author}{Verscheure, O.}, \bibinfo{author}{Ren, J.},
  \bibinfo{year}{2010}.
\newblock \bibinfo{title}{Cross validation framework to choose amongst models
  and datasets for transfer learning}, in: \bibinfo{booktitle}{Proceedings of
  European Conference on Machine Learning and Principles of Data Mining and
  Knowledge Discovery}, \bibinfo{publisher}{Springer}. pp.
  \bibinfo{pages}{547--562}.

\end{thebibliography}

\end{document}